\documentclass{article}
\usepackage[utf8]{inputenc}
\usepackage{style}

\newcommand{\cD}{\mathcal{D}}
\newcommand{\cF}{\mathcal{F}}

\title{Adversarial Robustness May Be at Odds With Simplicity}
\author{Preetum Nakkiran\thanks{
This work supported in part by
NSF GRFP Grant No. DGE1144152,
and Madhu Sudan’s Simons Investigator Award and NSF Award CCF
1715187.
}
\\Harvard University\\\texttt{preetum@cs.harvard.edu}}
\date{January 2019}

\begin{document}

\maketitle

\begin{abstract}
Current techniques in machine learning are so far are unable to learn classifiers
that are robust to adversarial perturbations.
However, they are able to learn non-robust classifiers with very high accuracy, even in the presence of random perturbations.
Towards explaining this gap,
we highlight the hypothesis that
\emph{robust classification may require more complex classifiers (i.e. more capacity) than standard classification.}

In this note, we show that this hypothesis is indeed possible, by giving several theoretical examples
of classification tasks and sets of ``simple'' classifiers for which:
\begin{enumerate}
    \item There exists a simple classifier with high standard accuracy,\\
    and moreover high accuracy under random $\ell_\infty$ noise.
    \item Any simple classifier is not robust:
    it must have high adversarial loss with $\ell_\infty$ perturbations.
    \item Robust classification is possible, but only with more complex classifiers
    (exponentially more complex, in some examples).
\end{enumerate}
Moreover, \emph{there is a quantitative trade-off between robustness and standard accuracy among simple classifiers.}
This suggests an alternate explanation of this phenomenon, which appears in practice:
the tradeoff may occur not because the classification task inherently requires such a tradeoff (as in [Tsipras-Santurkar-Engstrom-Turner-Madry `18]), but because the structure of our current classifiers imposes such a tradeoff.
\end{abstract}

\section{Introduction}
Current machine-learning models are able to achieve high-accuracy on a variety of classification tasks (such as image recognition),
but it is now well-known that standard models are susceptible to \emph{adversarial examples}: small perturbations of the input which are imperceptible to humans, but cause misclassification \cite{szegedy2013intriguing}.
Various techniques have been proposed to learn models that are robust to small adversarial perturbations, but so far these robust models have failed to be nearly as accurate as their non-robust counterparts \cite{tsipras2018robustness, athalye2018obfuscated}.
Perhaps surprisingly, it is easy in practice to learn classifiers robust to small \emph{random} perturbations, but not to small \emph{adversarial} perturbations.

This motivates at least two questions in the area:

\begin{center}
    \emph{Q1 (informal). Why do current techniques fail to learn good adversarial-robust classifiers,
    while they suffice to learn good standard, and noise-robust classifiers?}
    
    \medskip
    
    \emph{Q2 (informal). Why is there an apparent trade-off between robustness and accuracy for current classifiers?}
\end{center}

To state these questions more precisely, we recall the notions of standard and adversarial loss.
In standard classification we have a data distribution $\cD$
over pairs $(x, y)$, with inputs $x \in \R^d$ and labels $y \in \mathcal{Y}$.
We wish to construct a classifier $f: \R^d \to \mathcal{Y}$
with low \emph{standard loss}\footnote{We consider binary loss here for simplicity.}:
$$\text{(standard loss):}\quad \E_{(x, y) \sim D}[ \1\{f(x) \neq y \}]$$
In the learning setting, we have some family of classifiers $\mathcal{F}$,
and we  wish to find a classifier $f \in \mathcal{F}$ with low standard loss, given independent examples $(x_i, y_i) \sim \cD$ from the distribution.

We may also wish to find classifiers robust against \emph{uniformly random}
$\ell_\infty$ noise,
in which case we want low \emph{noise-robust loss}:
$$\text{(noise-robust loss):}\quad \E_{(x, y) \sim D}[
\E_{\Delta \sim Unif(B_\infty(\eps))}\{f(x + \Delta) \neq y \}]$$

In adversarially-robust classification, we want to protect against an adversary that is allowed to perturb the input, knowing the classifier $f$ and input $x$. Following \cite{madry2017towards}, 
this is modeled as finding a classifier $f$ with low \emph{adversarial loss}:
$$\text{(adversarial loss):}\quad \E_{(x, y) \sim D}[ \max_{\Delta: ||\Delta||_\infty \leq \eps}
\1\{f(x + \Delta) \neq y \}]$$
In this note, we focus on $\ell_\infty$-bounded adversaries,
motivated by the setting of adversarial examples on images.
Now we can state the two questions as:

\begin{center}
    \emph{Q1. Why do current techniques fail to learn classifiers with low adversarial loss,
    while they suffice to learn classifiers with low standard loss, and low noise-robust loss?}
    
    \medskip
    
    \emph{Q2. Why is there a tradeoff between adversarial-loss and standard-loss among current classifiers?}
\end{center}

One possible explanation for the above is that robust classifiers simply do not exist -- that is, the distribution we wish to classify is inherently ``hard'', and does not admit robust classifiers. In this note, we reject this explanation, since humans appear to robustly classify images. Under the assumption that robust classifiers exist, there are several hypotheses for Question 1 in the literature:


\begin{enumerate}[(A)]
    \item The sample-complexity of learning a robust classifier is higher than that of a standard classifier.
    \item The computational-complexity of learning a robust classifier is higher than that of a standard classifier.
    \item The complexity (e.g. capacity) of a robust classifier must be higher than that of a standard classifier.
\end{enumerate}

Note that Hypotheses (A) and (B) are about the difficulty of the \emph{learning problem},
while Hypothesis (C) involves only the classification task.
We show that there exist settings where Hypothesis (C) explains both Questions 1 and 2.
Concretely, this means our failure to train high-accuracy robust classifiers may be because we are not searching within a sufficiently rich class of classifiers, regardless of the sample-complexity or computational-complexity of the learning this classifier.

\begin{remark}
These hypotheses are not necessarily disjoint, and more fine-grained hypotheses are possible.
For example, the hypothesis that ``SGD-based adversarial-training on neural networks fails to learn robust classifiers, even when robust neural networks exist'' could fall under both Hypothesis (B) and (C). In particular, it could be the case that SGD is not a sufficiently powerful learning algorithm, and moreover that networks learnt by SGD are too ``simple'' to be robust.
As a first step, we focus on the coarse-grained hypotheses above.
\end{remark}

{\bf Contributions.}
In this note, we show that there exist classification tasks where Hypothesis (C) is provably true, and explains Questions 1 and 2.
This shows that Hypothesis (C) is not vacuous in a strong sense,
and we hope these theoretical examples can yield insight into the phenomenon in practice.

Specifically, we give several examples of a distribution $(x, y) \sim \cD$ and a family of ``simple'' classifiers $\mathcal{F}$ for which the following properties provably hold:
\begin{enumerate}
    \item There exists a simple classifier $f \in \cF$ with low standard loss, and low noise-robust loss.
    \item Every simple classifier $f \in \cF$ is not adversarially robust; it has high adversarial loss w.r.t $\ell_\infty$ perturbations.
    \item There exists a robust classifier $f^*$ with low adversarial loss (but is not simple).
    \end{enumerate}
The ``simple'' class for us can be taken to be the set of Linear Threshold Functions.
In one of our examples, any robust classifier must take exponential time.
That is, we show
\begin{center}
\emph{robust classification may be exponentially more complex than standard classification.}
\end{center}

Further, the simple classifier that minimizes adversarial-loss has very high standard-loss.
More generally, we show
\begin{center}
\emph{there exists a quantitative tradeoff between robustness and accuracy among ``simple'' classifiers.}
\end{center}
This suggests an alternate explanation of this tradeoff, which appears in practice:
the tradeoff may be happening not because the distribution inherently requires such a tradeoff (as in \cite{tsipras2018robustness}), but because the structure of our current classifiers imposes such a tradeoff.

Our constructions have the additional property that
    the distribution of the input $x \in \R^d$ has all coordinates with the same marginal distribution,
which is inspired by classification tasks in practice (e.g. image classification).

\subsection{Related Work}
We focus only on works directly related to Questions 1 and 2 above.
First, there are several works arguing that robust classifiers simply may not exist \cite{shafahi2018are, tsipras2018robustness, fawzi2018adversarial}.
Notably, Tsipras et al. \cite{tsipras2018robustness} argues that the tradeoff between robustness and accuracy may be an inevitable feature of the classification task -- they give an example of a classification task for which a good robust classifier provably does not exist,
although a good standard classifier does exist.
In contrast, we reject this explanation, since we are working under the assumption that robust classifiers do exist (e.g. humans).

Madry et al. \cite{madry2017towards} acknowledges Hypothesis (C), and gives empirical evidence towards this by showing that increased network capacity appears to help with adversarial robustness, up to a point.
Schmidt et al. \cite{schmidt2018adversarially} proposed Hypothesis (A), observing that adversarial-loss has larger generalization error than standard-loss in practice.
They further give a theoretical example where learning a robust classifier requires polynomially more samples than learning a standard classifier.
Bubeck et al. \cite{bubeck2018adversarial} shows that this polynomial gap in sample-complexity is the worst possible gap under reasonable assumptions -- that is, it is often information-theoretically possible to learn a robust classifier if one exists, from only polynomially-many samples.
Further, Bubeck et al. \cite{bubeck2018adversarial, bubeck2018adversarial-crypto} propose Hypothesis (B), and give a theoretical example of a learning task where learning a robust classifier is not possible in polynomial time (under standard cryptographic assumptions).

In contrast, we focus on Hypothesis (C), which involves only the classification task and not the learning task.
As far as we are aware, we give the first theoretical examples demonstrating this hypothesis, and showing that the tradeoff between standard and adversarial loss may exist only among restricted sets of classifiers.

\subsection{Overview of Our Constructions.}
{\bf Construction 1.}
Consider the following classification task.
Let the distribution $\cD$ over pairs $(x, y)$ be defined as:
Sample $y \sim \{+1, -1\}$ uniformly, and sample each coordinate of $x \in  \R^n$
independently as
$$
x_i =
\begin{cases}
+y &\text{w.p. 0.51}\\
-y &\text{w.p. 0.49}
\end{cases}
$$
Recall, we wish to predict the class $y$ from the input $x$.
Let the set of ``simple'' classifiers $\cF$ be Linear Threshold Functions, of the form
$f_w(x) = \1\{\langle w, x \rangle > 0\}$ for $w \in \R^n$.
Consider $\ell_\infty$ adversarial perturbation of up to $\eps = 
\frac{1}{2}$.
Note that:
\begin{enumerate}
    \item There exists a linear classifier with
    $\text{(standard-loss)} \leq \exp(-\Omega(n))$.\\
    For example,
    $f_{\1}(x) := \1\{\sum^n_{i=1} x_i > 0\}$.
    \item Every linear classifier has $\text{(adversarial-loss)} \geq \Omega(1)$.\\
    The sum $\sum_i x_i$ above, for example, concentrates around $\pm 0.01n$ but can be perturbed by $\eps n = n/2$ by an $\ell_\infty$ adversary, causing mis-classification with high probability.
    \item There exist classifiers with $\text{(adversarial-loss)} \leq \exp(-\Omega(n))$.\\
    For example, $f^*(x) := \1\{ \sum^n_{i=1} \text{Round}(x_i) > 0 \}$ where
    $\text{Round}(\cdot)$ rounds its input to $\{-1, 1\}$.
\end{enumerate}
Moreover, the linear classifier of (1) is robust to random $\ell_\infty$ \emph{noise} of order $\eps$, but just not to adversarial perturbation.

We also have a tradeoff between adversarial-loss and standard-loss in this setting:
Linear classifiers with larger support will be more accurate on the standard distribution, but their larger support makes them more vulnerable to adversarial perturbation.
Quantitatively, consider for simplicity the subclass of linear threshold functions
$\cF' = \{f_w : w \in \{0, 1\}^n \}$.
Then there exists some universal constant $\gamma$
such that
(Theorem~\ref{thm:loss-tradeoff}):
$$
\boxed{
\forall f \in \cF': \quad \text{(adversarial-loss of $f$)} + \text{(standard-loss of $f$)}^\gamma \geq 1
}
$$

The above example may be unsatisfying, since the more ``complex'' classifier simply
pre-processes its input by rounding.
This is specific to the binary setting, where we can always eliminate the effect of
any perturbation by rounding to $\{\pm 1\}$.
One may wonder if it is always possible to construct a robust classifier by ``pre-processing'' a simple standard classifier, and if robust classifiers are always just slightly more complex than standard ones.
The next construction shows that this is not the case.

{\bf Construction 2.}
The following construction shows that robust classification can require exponentially more complex classifiers than simple classification.
Let $g: \{0, 1\}^n \to \{0, 1\}$ be a function that is \emph{average-case hard},
such that any $2^{O(n)}$-time nonuniform algorithm cannot compute $z \mapsto g(z)$ noticeably better than random guessing.
For example, taking $g$ to be a random function from $\{0, 1\}^n \to \{0, 1\}$ suffices.
For any $\eps < 1$, let the distribution be defined over $(x, y)$ as follows.
Sample $z \in \{0, 1\}^n$ uniformly, and let $x = (\eps g(z), z) \in \R^{n+1}$.
Let $y = g(z)$.
Recall, we wish to predict $y$ from the input $x$. In this setting:
\begin{enumerate}
    \item The dictator function $f(x) := x_1$ has $\text{(standard-loss)} = 0$.
    \item Any classifier running in time $\leq 2^{O(n)}$ has
    $\text{(adversarial-loss)} \geq \frac{1}{2} - \exp(-\Omega(n))$.
    \item There exists a classifier with $\text{(adversarial-loss)} = 0$.
\end{enumerate}
Here, an adversarial perturbation of order $\eps$ can destroy the first coordinate of $x$, and thus robustly predicting $y$ from $x$ requires actually computing the function $g(z)$ -- which we assume requires time $2^{\Omega(n)}$.

This example can be extended to have all coordinates marginally uniform over $[0, 1]$,
so the first coordinate is not distinguished in this way.

\begin{remark}
We took $g$ to be average-case hard for $2^{\Omega(n)}$-time,
which is a distribution unlikely to appear in Nature.
For a more ``realistic'' example, we can take $g$ to be hard with respect to the class of classifiers currently under consideration. For example, $g$ could be a function that is average-case hard for neural-networks of a specific bounded size.
\end{remark}

\begin{remark}
In this example, there is an ``easy but fragile'' feature that a standard classifier can use, but which can be destroyed by adversarial perturbation.
A robust classifier, however, cannot ``cheat'' using this feature, and has to
in some sense actually solve the problem.
\end{remark}

\section{Formal Constructions}
\newcommand{\StdLoss}[2]{\mathrm{StdLoss}_{#1}(#2)}
\newcommand{\AdvLoss}[2]{\mathrm{AdvLoss}_{#1, \eps}(#2)}
\newcommand{\NoisyLoss}[2]{\mathrm{NoisyLoss}_{#1, \eps}(#2)}
Here we formally define and state our claimed properties about Constructions 1 and 2; proofs appear in the Appendix.

\begin{definition}[Loss Functionals]
For any distribution $\cD$ over $\R^n \x \mathcal{Y}$,
$\eps > 0$, and any function $f: \R^n \to \mathcal{Y}$,
define the standard loss of $f$ as:
$$\StdLoss{D}{f} := \E_{(x, y) \sim D}[ \1\{f(x) \neq y\}]$$
the adversarial loss of $f$ as:
$$
\AdvLoss{D}{f} :=
\E_{(x, y) \sim D}\left[ \max_{\Delta \in \R^n: ||\Delta||_\infty \leq \eps} 
\1\{f(x + \Delta) \neq y\} \right]
$$
and the noise-robust loss of $f$ as:
$$\NoisyLoss{D}{f} :=
\E_{(x, y) \sim D}\left[ \E_{\Delta \sim
\mathrm{Unif}(\{\delta \in \R^n: ||\delta||_\infty \leq \eps\} )}
\1\{f(x + \Delta) \neq y\} \right]$$
\end{definition}

\subsection{Construction 1}

\begin{definition}[Construction 1]
Define $\cD_1$ as the following distribution over $(x, y)$.
Sample $y \sim \{+1, -1\}$ uniformly, and sample each coordinate of $x \in  \R^n$
independently as
$$
x_i =
\begin{cases}
+y &\text{w.p. 0.51}\\
-y &\text{w.p. 0.49}
\end{cases}
$$
\end{definition}

Let $\cF = \{f_w(x) := \mathrm{sign}(\langle w, x \rangle) : w \in \R^n\}$ be the set of 
linear classifiers.

\begin{theorem}
\label{thm:c1}
For all $\eps \in (0.01, 1)$, the distribution $D_1$ of Construction 1 satisfies the following properties.
\begin{enumerate}
    \item There exists a linear classifier $f \in \cF$ with 
    $\StdLoss{D_1}{f} \leq \exp(-\Omega(n))$ and 
    $\NoisyLoss{D_1}{f} \leq \exp(-\Omega(n))$.
    \item Every linear classifier $f \in \cF$
    has $\AdvLoss{D_1}{f} \geq \Omega_\eps(1)$.
    \item There exists a (non-linear) classifier $f^*: \R^n \to \{\pm 1\}$ with
    $\AdvLoss{D_1}{f^*} \leq \exp(-\Omega(n))$.
\end{enumerate}
Where the $\Omega(\cdot)$ hides only universal constants, and $\Omega_\eps(\cdot)$ hides constants depending only on $\eps$.
\end{theorem}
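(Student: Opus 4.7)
The plan is to prove Parts 1 and 3 by constructing explicit classifiers based on $\sum_i x_i$, and Part 2 by a case analysis on the weight vector $w$.

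\textbf{Parts 1 and 3.} Take $f_{\mathbf{1}}(x) := \mathrm{sign}(\sum_i x_i)$. On $\cD_1$, the variables $y x_i$ are i.i.d.\ in $\{-1,1\}$ with mean $0.02$, so Hoeffding's inequality on $\sum_i (y x_i - 0.02)$ yields $\StdLoss{\cD_1}{f_{\mathbf{1}}} \leq \exp(-\Omega(n))$. For $\NoisyLoss{\cD_1}{f_{\mathbf{1}}}$, the noise coordinates $\Delta_i$ are i.i.d.\ uniform on $[-\eps, \eps]$; the summands $y(x_i + \Delta_i)$ remain independent and bounded in $[-(1{+}\eps),1{+}\eps]$, with unchanged mean $0.02$, so the same Hoeffding gives $\exp(-\Omega(n))$. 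For Part 3, take $f^*(x) := \mathrm{sign}\bigl(\sum_i \mathrm{sign}(x_i)\bigr)$. Because $|x_i| = 1$ almost surely and $\eps < 1$, every admissible $\Delta$ satisfies $\mathrm{sign}(x_i + \Delta_i) = x_i$ for all $i$, so $f^*(x + \Delta) = f_{\mathbf{1}}(x)$ deterministically, and $\AdvLoss{\cD_1}{f^*} \leq \StdLoss{\cD_1}{f_{\mathbf{1}}} \leq \exp(-\Omega(n))$ by Part 1.

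\textbf{Part 2.} Fix any $f_w$ with $w \neq 0$; rescale so $\|w\|_1 = 1$. Let $M := y \langle w, x \rangle$; since the adversary can shift $\langle w, \cdot \rangle$ by up to $\eps \|w\|_1 = \eps$ against $y$, misclassification after the optimal attack is exactly the event $\{M \leq \eps\}$. Writing $y x_i$ as i.i.d.\ $\pm 1$-valued with mean $0.02$, $M$ has mean $\mu := 0.02 \sum_i w_i \in [-0.02, 0.02]$ and standard deviation $\sigma = \Theta(\|w\|_2)$. Since the theorem's range keeps $\eps$ bounded above the per-coordinate signal by a constant, the gap $t := \eps - \mu$ is positive and depends only on $\eps$. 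I would then split on the ratio $\rho := \|w\|_\infty / \sigma$: in the ``spread'' regime $\rho \leq \rho_0(\eps)$, Berry--Esseen approximates $(M-\mu)/\sigma$ by $\cN(0,1)$ with $o_\eps(1)$ Kolmogorov error, giving $\Pr[M \leq \eps] \geq \Phi(t/\sigma) - o_\eps(1) \geq \Phi(t) - o_\eps(1) = \Omega_\eps(1)$ (using $\sigma \leq \|w\|_2 \leq \|w\|_1 = 1$); in the ``concentrated'' regime $\rho \geq \rho_0(\eps)$, some coordinate $i^*$ has $|w_{i^*}| \geq \rho_0 \sigma$, and conditioning on $\{y x_j\}_{j \neq i^*}$ reduces $M$ to a two-valued random variable whose lower branch falls below $\eps$ with conditional probability $\geq 0.49$, so $\Pr[M \leq \eps] = \Omega(1)$ follows by averaging over the conditioning.

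The main obstacle is Part 2. The delicate regime is $w$ with $\mu$ close to $0.02$ (e.g.\ $w = \mathbf{1}/n$), where the adversarial margin $t$ shrinks to $\eps - 0.02$, and any slack in constants propagates through; no single anti-concentration tool (Berry--Esseen, Littlewood--Offord, Paley--Zygmund) handles all weight vectors uniformly, which is why a two-regime split is needed with a threshold $\rho_0(\eps)$ depending only on $\eps$. A clean route is (i) rescale $\|w\|_1 = 1$; (ii) observe that swapping $w \leftrightarrow -w$ merely flips the classifier's predictions, so either $f_w$ or $f_{-w}$ has $\sum_i w_i \geq 0$, and it suffices to prove the bound in this case (the $\sum_i w_i < 0$ branch gives $\mathrm{StdLoss} \geq 1/2 - o_n(1)$ directly); (iii) execute the Berry--Esseen-based two-regime argument above. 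The intuition is simply that the per-coordinate signal is fixed at $0.02$, so once $\eps$ exceeds this level the adversary's shift $\eps \|w\|_1$ dominates the expected margin $\mu \leq 0.02 \|w\|_1$, and constant anti-concentration of the resulting sum does the rest.
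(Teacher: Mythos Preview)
Your Parts 1 and 3 are essentially identical to the paper's proof: same classifiers $f_{\mathbf 1}$ and $f^*$, same Hoeffding bounds, same reduction of $\AdvLoss{\cD_1}{f^*}$ to $\StdLoss{\cD_1}{f_{\mathbf 1}}$ via the observation that rounding undoes any $\eps<1$ perturbation.

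For Part 2 you are working much harder than necessary, and the paper's route is worth knowing. You frame the goal as an \emph{anti}-concentration statement, $\Pr[M \le \eps] \ge \Omega_\eps(1)$, and then reach for Berry--Esseen plus a regime split. But the complementary form $\Pr[M > \eps] \le 1 - \Omega_\eps(1)$ is a plain upper-tail bound: with your normalization $\|w\|_1=1$ we have $\E[M] = 0.02\sum_i w_i \le 0.02 < \eps$, so $\{M > \eps\}$ is a deviation above the mean, and Hoeffding for the weighted sum $\sum_i w_i z_i$ (with $z_i = yx_i \in [-1,1]$) gives directly
\[
1 - \AdvLoss{\cD_1}{f_w} \;=\; \Pr[M > \eps] \;\le\; \exp\!\left(-\frac{(\eps-0.02)^2}{2\|w\|_2^2}\right)\;\le\; \exp\!\left(-\frac{(\eps-0.02)^2}{2}\right),
\]
the last step using $\|w\|_2 \le \|w\|_1 = 1$. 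That is the paper's entire argument for Part 2: one line, no case analysis, no CLT. Your Berry--Esseen branch would also go through (the Kolmogorov error is $O(\|w\|_\infty/\|w\|_2) = O(\rho)$, so choosing $\rho_0(\eps)$ small enough works), but it buys nothing over Hoeffding here.

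Separately, your ``concentrated'' branch has a gap as written. After conditioning on $\{yx_j\}_{j\neq i^*}$, the lower value of $M$ is $c - |w_{i^*}|$ with $c = \sum_{j\neq i^*} w_j z_j$; you assert this lower value is $\le \eps$, but that is not automatic for every realization of $c$ (e.g.\ when $|w_{i^*}|$ is only moderately large and $c$ lands near its maximum $1 - |w_{i^*}|$). To close this you would need to argue that $c$ is not too large with at least constant probability --- which is itself a tail bound on the remaining coordinates, at which point you may as well have applied Hoeffding to the full sum from the start.
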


\begin{theorem}[Loss Tradeoff]
\label{thm:loss-tradeoff}
Consider the subset of linear classifiers
$\cF' = \{f_w(x) = \mathrm{sign}(\langle w, x \rangle) : w \in \{0, 1\}^n\}$.
For all $\eps \in (0.01, 1)$, there exists a constant $\gamma$ such that for all $n$,
$$
\forall f \in \cF': \quad \AdvLoss{D_1}{f} + \left(\StdLoss{D_1}{f}\right)^\gamma \geq 1
$$
\end{theorem}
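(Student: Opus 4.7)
The plan is to parameterize each $f_w \in \cF'$ by $k = |\mathrm{supp}(w)|$, reduce both losses to tails of a single binomial random variable, and then establish a strictly positive lower bound, uniform in $k$, on $R(k,\eps) := \log(1-\AdvLoss{D_1}{f_w})/\log \StdLoss{D_1}{f_w}$.

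First I would set up the reduction. By the $y \mapsto -y$ symmetry of $\cD_1$, it suffices to condition on $y = +1$, so $\xi_i := x_i$ are i.i.d.\ $\{\pm 1\}$-valued with $\Pr[\xi_i = +1] = 0.51$. Let $S = \mathrm{supp}(w)$, $|S| = k$. Because the worst $\ell_\infty$-adversary shifts $\langle w, x\rangle = \sum_{i\in S} x_i$ by exactly $-\eps k$, and since $\sum_{i\in S} \xi_i = 2B - k$ where $B := |\{i\in S : \xi_i = +1\}| \sim \mathrm{Bin}(k, 0.51)$,
\[
\StdLoss{D_1}{f_w} = \Pr[B \le k/2], \qquad 1 - \AdvLoss{D_1}{f_w} = \Pr[B > (1+\eps)k/2].
\]
The target inequality $\AdvLoss + \StdLoss^\gamma \ge 1$ is equivalent to $\StdLoss^\gamma \ge 1 - \AdvLoss$, which (both sides strictly in $(0,1)$ for $k \ge 1$, $\eps < 1$) holds iff $\gamma \le R(k,\eps)$. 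So the task becomes showing $\inf_{k \ge 1} R(k,\eps) > 0$.

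For large $k$ I would invoke sharp binomial tail asymptotics (Stirling, or the Cram\'er--Chernoff theorem paired with the standard mode-based lower bound on the lower tail) to get
\[
\log \StdLoss = -k D_1 + O(\log k), \qquad \log(1-\AdvLoss) = -k D_2 + O(\log k),
\]
where $D_1 = D(\tfrac{1}{2} \,\|\, 0.51)$ and $D_2 = D(\tfrac{1+\eps}{2} \,\|\, 0.51)$ are binary KL divergences. Under the effective margin hypothesis $\eps > 0.02$ one has $D_2 > D_1 > 0$, so $R(k,\eps) \to D_2/D_1$, and absorbing the $O(\log k)$ slack gives $R(k,\eps) \ge D_2/(2 D_1)$ for all $k \ge K(\eps)$. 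For the remaining $k \in \{1, \dots, K\}$ each $R(k,\eps)$ is a positive constant since both tails lie strictly in $(0,1)$; in particular at $k = 1$ an $\eps$-perturbation cannot flip the sign of a single $\pm 1$ coordinate when $\eps < 1$, so $\StdLoss = 1-\AdvLoss = 0.49$ and $R(1,\eps) = \log(0.51)/\log(0.49) \approx 0.944$. Setting $\gamma := \min\!\big(D_2/(2D_1),\ \min_{1 \le k \le K} R(k,\eps)\big) > 0$ completes the proof.

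The main obstacle is controlling the polynomial factors in the binomial tails tightly enough to lower-bound the log-ratio $R$, not just the limiting ratio of exponents: since I only need a lower bound, the $O(\log k)$ slack in each exponent degrades $R$ by only a $1 + o(1)$ factor, which is absorbed by stepping back from $D_2/D_1$ to any strictly smaller constant. A secondary subtlety is that the binding regime depends on $\eps$: for $\eps$ close to the margin $0.02$ the large-$k$ limit $D_2/D_1$ is the smaller constant, while for $\eps$ close to $1$ the $k = 1$ value $\approx 0.944$ is smaller, so the final $\gamma$ must be taken as the minimum of both contributions.
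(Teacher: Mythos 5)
Your proposal is correct in substance but takes a genuinely different, and considerably more refined, route than the paper. The paper's argument is a two-line exponent-matching trick: it upper-bounds the adversarial success $1-\AdvLoss{D_1}{f_w}$ by the one-sided Hoeffding bound $\exp(-\delta^2 k/2)$ (with $\delta$ the gap between $\eps$ and $\E[yx_i]$), lower-bounds $\StdLoss{D_1}{f_w}$ by the probability $(0.49)^k$ of the single event that every supported coordinate is flipped, and then picks $\gamma = \delta^2/(2\ln(1/0.49))$ so the two exponentials line up uniformly in $k$, with no asymptotics and no case analysis. You instead work with the exact binomial tails and show $\inf_{k\ge 1} R(k,\eps)>0$ via sharp large-deviation asymptotics for large $k$ plus a finite minimum for small $k$; this yields the asymptotically optimal exponent $D_2/D_1$ rather than the paper's cruder constant, at the cost of needing a matching \emph{lower} bound on the lower tail (which you correctly flag) and a case split. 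Your reduction to $B\sim\mathrm{Bin}(k,0.51)$, the equivalence of the target inequality with $\gamma\le R(k,\eps)$, and the overall structure are all sound. Two small corrections: the blanket claim $D_2>D_1$ is false for $\eps$ slightly above $0.02$ (e.g.\ $\eps=0.03$ puts $(1+\eps)/2$ closer to $0.51$ than $1/2$ is), but it is never actually used --- positivity of both divergences suffices for $D_2/(2D_1)>0$; and at $k=1$ one has $1-\AdvLoss{D_1}{f_w}=0.51$ rather than $0.49$, though your final ratio $\log(0.51)/\log(0.49)$ is the correct one. Finally, your ``effective margin hypothesis $\eps>0.02$'' is not a weakening but a correction of the statement: since $\E[yx_i]=0.02$ (the paper computes $0.01$), the theorem as stated actually fails for $\eps\in(0.01,0.02)$ and large $n$ --- taking $w=\mathbf{1}$, both losses tend to $0$ --- and the paper's own proof silently requires $\eps>0.02$ for its $\delta$ to be positive.
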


\subsection{Construction 2}
We first need the notion of an average-case hard function.
\begin{definition}[Average-Case Hard]
A boolean function $g: \{0, 1\}^n \to \{0, 1\}$ is
\emph{$(s, \delta)$-average-case hard}
if for all non-uniform probabilistic algorithms $A$ running in time $s$,
$$\Pr_{A, x\in \{0, 1\}^n}[A(x) \neq g(x)] \geq \delta$$
\end{definition}

There exists functions $g$ which are 
$(2^{O(n)}, 1/2 - 2^{-\Omega(n)})$-average-case hard
(a random function $g$ will suffice with constant probability).
We now define Construction 2; note that this extends the presentation in the Introduction
by having all coordinates marginally uniform, and also admitting a simple noise-robust classifier.
\begin{definition}[Construction 2]
For a given function $g: \{0, 1\}^n \to \{0, 1\}$,
and a given $\eps$,
define $\cD_{g, \eps}$ as the following distribution over $(x, y)$.

Sample $z \in \{0, 1\}^n$ uniformly at random.
Sample $a, b \in [0, 1]^n$ with each coordinate independently uniform $a_i, b_i \in [0, 1]$.
Define $x = (\alpha, \beta) \in [0, 1]^{4n}$ and $y \in \{0, 1\}$ as:
\begin{align*}
x &:= \{(a_i, a_i + 2\eps g(z) ~\mathrm{mod}[0, 1])\}_{i \in [n]}
\circ
\{ (b_i, b_i + 0.5z_i ~\mathrm{mod}[0, 1]) \}_{i \in [n]}\\
y &:= g(z)
\end{align*}
where $\circ$ denotes concatenation.
\end{definition}

\begin{theorem}
\label{thm:c2}
For all functions $g: \{0, 1\}^n \to \{0, 1\}$ that are $(s(n), \delta(n))$-average-case hard, and
all $\eps \in (0, 1/8)$,
the distribution $D_{g, \eps}$ of Construction 2 satisfies the following properties.
\begin{enumerate}
    \item
    The following classifier
    $$f(\alpha, \beta) =
    \1\{\exists i \in [n]: (\alpha_i - \alpha_{i+1}) ~\mathrm{mod}[0, 1] \geq 2\eps \} $$
    has
    $\StdLoss{D}{f} = 0$ and 
    $\NoisyLoss{D}{f} \leq \exp(-\Omega(n))$.
    
    \item Every classifier algorithm $f$ running in time $s(n) - \Theta(n)$
    has $\AdvLoss{D}{f} \geq \delta$.
    
    \item There exists a classifier $f^*: \R^n \to \{\pm 1\}$ with
    $\AdvLoss{D}{f^*} = 0$.
\end{enumerate}
In particular, we can take $g$ to be $(s(n) = 2^{O(n)}, \delta(n) = 1/2 - 2^{-\Omega(n)})$ average-case hard.
\end{theorem}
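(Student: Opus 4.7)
Handle Part (3) first, as it motivates the construction. The $\beta$-block is a robust encoding of $z$: each pair $(b_i, b_i + 0.5 z_i \bmod 1)$ has intra-pair modular difference $0$ or $1/2$, and an $\ell_\infty$-perturbation of size $\eps < 1/8$ moves this difference by at most $2\eps < 1/4$, so thresholding at $1/4$ recovers $z_i$ exactly under any adversary. Hence $f^*(x) := g(\mathrm{decode}(\beta))$ attains $\AdvLoss{\cD_{g,\eps}}{f^*} = 0$. For Part (1), the $\alpha$-block is a parallel \emph{fragile} encoding of $g(z)$: pair $(a_i, a_i + 2\eps g(z) \bmod 1)$ has modular difference $0$ or $2\eps$, so even a single pair already yields $\StdLoss{\cD_{g,\eps}}{f} = 0$ on the clean distribution. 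For the noise bound I would take a majority over the $n$ independent per-pair decoders; each decoder has uniform-noise error bounded away from $1/2$ by an $\Omega(1)$ margin (since the $2\eps$ shift biases the decoded difference), and independence of noise across the $n$ pairs gives $\NoisyLoss \leq \exp(-\Omega(n))$ by Chernoff.

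Part (2) is the substantive step; my plan is a reduction from computing $g$ on average. The key geometric point is that an $\ell_\infty$-adversary of radius $\eps$ can \emph{neutralize} the $\alpha$-block: define
\[
P(z, a, b) \;:=\; \bigl(\{(a_i + \eps,\, a_i + \eps)\}_{i \in [n]},\, \beta(b, z)\bigr),
\]
and observe that for every fixed $(z, a, b)$, the point $P$ lies within $\ell_\infty$-distance $\eps$ of the honest input $x(z, a, b, g(z))$, regardless of whether $g(z)=0$ or $g(z)=1$. (When $y=0$ the required $\alpha$-perturbation is $(+\eps, +\eps)$; when $y=1$ it is $(+\eps, -\eps)$, using the mod-$1$ identification to absorb the wraparound.) Thus $\Delta^*(x, y) := P - x$ is a valid $\eps$-adversary, and so
\[
\Pr_{z, a, b, \mathrm{coins}(f)}[f(P) \neq g(z)] \;\leq\; \E_{(x, y) \sim \cD_{g,\eps}}\bigl[\1\{f(x + \Delta^*) \neq y\}\bigr] \;\leq\; \AdvLoss{\cD_{g,\eps}}{f}.
\]
If some classifier $f$ of running time $\leq s(n) - \Theta(n)$ achieved $\AdvLoss{\cD_{g,\eps}}{f} < \delta$, then $A(z) := f(P(z, a, b))$ with fresh internal randomness $a, b$ would be a non-uniform probabilistic algorithm for $g$ of size $\leq s(n)$ and error $< \delta$, contradicting $(s, \delta)$-average-case hardness of $g$; hence $\AdvLoss \geq \delta$.

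\textbf{Main obstacle.} The substantive work is Part (2): constructing $\Delta^*$ so that (i) its $\ell_\infty$-norm is bounded by $\eps$ in both label cases (the wraparound in $a_i + 2\eps \bmod 1$ forces either a mod-$1$ case analysis or equivalently treating $\alpha$-coordinates as living on the torus $\R/\Z$), (ii) $P$ depends only on $(z, a, b)$ and not on $y$, so the reduction does not need to know $g(z)$ to run, and (iii) the $\Theta(n)$ overhead for sampling $a, b$ and forming $P$ fits inside the size slack in $s(n) - \Theta(n)$. Parts (1) and (3) are then short consequences of the fragile/robust encoding structure of $\alpha$ and $\beta$, respectively, together with a Chernoff bound.
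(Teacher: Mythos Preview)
Your approach to Parts (2) and (3) is essentially the paper's: the paper also neutralizes the $\alpha$-block with an $\eps$-perturbation (it adds $(+\eps g(z),\,-\eps g(z),\,+\eps g(z),\ldots)$ to $\alpha$, making the perturbed $\alpha$-distribution independent of $z$) and then reduces to the average-case hardness of $g$ with the same $\Theta(n)$ simulation overhead; your canonical-point $P$ is a reparametrization of this idea, and both versions share the wraparound caveat you flag as obstacle (i). For Part (3) the paper likewise decodes $z$ from $\beta$ and outputs $g(z)$.

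One small gap: for Part (1) the theorem fixes a \emph{specific} classifier $f$ --- the existential/OR test over the $\alpha$-pairs --- and asserts that \emph{this} $f$ has noise-robust loss $\leq \exp(-\Omega(n))$. Your proposal instead analyzes a majority vote via Chernoff, which establishes the existence of \emph{some} noise-robust simple classifier but not the stated bound for the stated $f$. The paper's argument for the OR classifier is asymmetric rather than Chernoff-style: under uniform noise, each pair's event fires with probability $0$ when $g(z)=0$ and with probability $\Omega(1)$ independently when $g(z)=1$, so the OR errs only with probability $(1-\Omega(1))^n$. To prove the theorem as written you would need to argue along those lines rather than switch to majority.
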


\section{Acknowledgements}
The author thanks Ilya Sutskever for asking the question that motivated this work.
We also thank Kelly W. Zhang for helpful discussions, and Ben Edelman for comments on an early draft.

\bibliographystyle{plain}
\bibliography{refs}

\appendix
\section{Proofs}
\begin{proof}[Proof of Theorem~\ref{thm:c1}]
For property (1), consider the linear classifier
$f_{1}(x) := \textrm{sign}(\sum_{i = 1}^n x_i)$. The upper-bounds on $\StdLoss{}{}$ and $\NoisyLoss{}{}$ follow from Chernoff-Hoeffding bounds:
\begin{align*}
\StdLoss{D_1}{f_1}
=
\Pr_{(x, y) \sim D}[ \textrm{sign}(\sum_i x_i) \neq y]
=
\Pr_{(x, y) \sim D}[ \sum_i yx_i  < 0]
\leq \exp(-\Omega(n))
\end{align*}
since $\E[yx_i] = +0.01$, and $\{x_i\}_{i \in [n]}$ are conditionally independent given $y$. 
Similarly,
\begin{align*}
\NoisyLoss{D_1}{f_1}
&=
\Pr_{(x, y) \sim D, \eta_i \sim [-\eps, +\eps]}[ \textrm{sign}(\sum_i x_i + \eta_i) \neq y]\\
&=
\Pr_{(x, y) \sim D, \eta_i \sim [-\eps, +\eps]}[ \sum_i y(x_i + \eta_i) < 0]\\
&\leq \exp(-\Omega(n))
\end{align*}

For property (2), this follows from Azuma-Hoeffding.
For any linear classifier $f_w$, the adversarial success is bounded by:
\begin{align*}
1-\AdvLoss{D_1}{f_w}
&= 1-\E_{(x, y) \sim D}\left[
\max_{\Delta \in \R^n: ||\Delta||_\infty \leq \eps} 
\1\{f_w(x + \Delta) \neq y\} \right]\\
&= \E_{(x, y) \sim D}\left[
\min_{\Delta \in \R^n: ||\Delta||_\infty \leq \eps} 
\1\{f_w(x + \Delta) = y\} \right]\\
&= \Pr_{(x, y) \sim D}\left[
\min_{\Delta \in \R^n: ||\Delta||_\infty \leq \eps} 
y\langle w, x + \Delta \rangle
> 0 \right]\\
&= \Pr_{(x, y) \sim D}\left[
y\langle w, x \rangle
- \eps ||w||_1
> 0 \right] \tag{minimized at $w = \eps y ~\mathrm{sign}(w)$}\\
&= \Pr_{z_i \sim D'}\left[
\langle w, z \rangle > \eps ||w||_1
\right] \tag{for $D' = 2B(0.51) - 1$, the distribution of $yx_i$}\\
&\leq 
\exp\left( -\frac{\delta^2 ||w||_1^2}{2||w||_2^2} \right)
\tag{$\star$. For $\delta := \eps - \E[z_i] > 0$, by Azuma-Hoeffding.}\\
&\leq 
\exp\left( -\frac{\delta^2}{2} \right)
= \exp\left( -\frac{(\eps - 0.01)^2}{2} \right)
= 1- \Omega_\eps(1)
\end{align*}
In line $(\star)$, note
$\E_{z \sim D'}[\langle w, z \rangle] = \E[z_1] \sum_i w_i < \E[z_i] ||w||_1$,
and in the next line $||w||_1 \geq ||w||_2$.

Thus, for any linear classifier $f_w$ we have
$$\AdvLoss{D_1}{f_w} \geq \Omega_\eps(1)$$

For property (3), simply consider the classifier
$f^*(x) :=  \mathrm{sign}( \sum^n_{i=1} \mathrm{Round}(x_i) )$ where $\mathrm{Round}(\cdot)$
rounds its argument to $\{\pm 1\}$.
Clearly, $\AdvLoss{D_1}{f^*} = \StdLoss{D_1}{f_1}$ since the rounding inverts the effect of any perturbation (for $\eps < 1$).
Thus, the adversarial loss is $\exp(-\Omega(n))$, as in the stadard classifier of property (1).

\end{proof}

\begin{proof}[Proof of Theorem~\ref{thm:loss-tradeoff}]
For a linear classifier $f_w : w \in \{0, 1\}$, let $k = \mathrm{supp}(w)$ be its support.
By Azuma-Hoeffding as in the proof of Theorem~\ref{thm:c1}, the adversarial-success is upper-bounded by:
$$
1-\AdvLoss{D_1}{f_w}
\leq 
\exp\left( -\frac{\delta^2 ||w||_1^2}{2||w||_2^2} \right)
= \exp(-\delta^2 k / 2)
$$
where $\delta = (\eps-0.01)^2 \geq \Omega(1)$.
Now, the standard loss is lower-bounded by:
$$
\StdLoss{D_1}{f_w}
=
\Pr_{(x, y) \sim D}[ \textrm{sign}(\sum_{i \in \mathrm{supp}(w)} x_i) \neq y]
\geq
\Pr_{(x, y) \sim D}[ \forall i \in \mathrm{supp}(w): x_i \neq y]
\geq (0.49)^k
$$
Combining these two bounds yields
$$
1-\AdvLoss{D_1}{f_w}
\leq 
\exp(-\delta^2 k / 2)
=
\exp(-k \ln(1/0.49))^\gamma
\leq
(\StdLoss{D_1}{f_w})^\gamma
$$
for
$\gamma :=
\frac{\delta^2}{2\ln (1/0.49)}
=\frac{(\eps - 0.01)^2}{2\ln (1/0.49)}
$.
\end{proof}

\begin{proof}[Proof of Theorem~\ref{thm:c2}]
Write the input $x$ as $x = (\alpha, \beta)$ for $\alpha, \beta \in \R^{2n}$.

For property (1): The bound on standard loss follows directly from the encoding.
The bound on noisy loss follows because, for every $i \in [n]$,
the event $\{(\alpha_i - \alpha_{i+1})~\mathrm{mod} [0, 1] \geq 2\eps \}$
occurs with probability $0$ if $g(z) = 0$, and with probability at least $\Omega(1)$ if $g(z) = 1$.

For property (2): First, consider the $\eps$-bounded adversary
that adds $(+\eps g(z), - \eps g(z), +\eps g(z), -\eps g(z), \dots)$ to the input's $\alpha$. Notice this adversary perturbs the input $x = (\alpha, \beta)$ such that $\alpha$ is independent of $z$ in the perturbed distribution.
Now, suppose for the sake of contradiction that there existed a classifier running in time $s(n) - \Theta(n)$, with loss less than $\delta$ on this perturbed distribution.
This would yield a time-$s(n)$ algorithm for computing $z \mapsto g(z)$ with error better than $\delta$: simply simulate the (perturbed) inputs to the classifier, which can be done in time $O(n)$, and output the result of the classifier.
Thus, such a classifier cannot exist, since $g$ is average-case hard.

For property (3): Notice that $z$ can be easily decoded from $\beta$, even with perturbations of up to $\eps < 1/4$. Thus, the classifier which decodes $z$, then computes $g(z)$ has $0$ adversarial loss.
\end{proof}

\end{document}